\newtheorem{lemma}{Lemma}
\newcommand{\1}[1]{\mathds{1}{\left\{ #1 \right\}}}
\newcommand{\Set}[1]{\mathchoice{\left\{ #1 \right\}}{\{ #1 \}}{\{ #1 \}}{\{ #1 \}}}
\title{Adaptively Pruning Features for Boosted Decision Trees}
\author{
  Maryam Aziz \\
  Northeastern University \\
  Boston, Massachusetts, USA \\
  \texttt{azizm@ccs.neu.edu} \\
  \And
  Jesse Anderton \\
  Northeastern University \\
  Boston, Massachusetts, USA \\
  \texttt{jesse@ccs.neu.edu} \\
  \And
  Javed Aslam \\
  Northeastern University \\
  Boston, Massachusetts, USA \\
  \texttt{jaa@ccs.neu.edu} \\
}
\begin{document}

\maketitle

\begin{abstract}

Boosted decision trees enjoy popularity in a variety of applications;
however, for large-scale datasets, the cost of training a decision
tree in each round can be prohibitively expensive.  Inspired by ideas
from the multi-arm bandit literature, we develop a highly efficient
algorithm for computing exact greedy-optimal decision trees,
outperforming the state-of-the-art \texttt{Quick Boost} method.  We
further develop a framework for deriving lower bounds on the problem
that applies to a wide family of conceivable algorithms for the task
(including our algorithm and \texttt{Quick Boost}), and we demonstrate
empirically on a wide variety of data sets that our algorithm is
near-optimal within this family of algorithms.  We also
derive a lower bound applicable to any algorithm solving the task, and
we demonstrate that our algorithm empirically achieves performance
close to this best-achievable lower bound.

\end{abstract}

\section{Introduction}\label{introduction}

Boosting algorithms are among the most popular classification
algorithms in use today, e.g. in computer vision, learning-to-rank,
and text classification.  Boosting, originally introduced by
\citet{Schapire90thestrength, Freund:1995:BWL:220262.220446,
  Freund:1996:ENB:3091696.3091715}, is a family of machine learning
algorithms in which an accurate classification strategy is learned by
combining many ``weak'' hypotheses, each trained with respect to a
different weighted distribution over the training data.  These
hypotheses are learned sequentially, and at each iteration of boosting
the learner is biased towards correctly classifying the examples which
were most difficult to classify by the preceding weak hypotheses.

Decision trees \citep{Quinlan:1993:CPM:152181}, due to their simplicity
and representation power, are among the most popular weak learners
used in Boosting algorithms \citep{Freund:1996:ENB:3091696.3091715,
  Quinlan:1996:BBC:1892875.1892983}.  However, for large-scale data
sets, training decision trees across potentially hundreds of rounds of
boosting can be prohibitively expensive.  Two approaches to ameliorate
this cost include
(1) \emph{approximate decision tree training},
which aims to identify a subset of the
features and/or a subset of the training examples such that
\emph{exact} training on this subset yields a high-quality decision
tree,
and (2)
\emph{efficient exact decision tree training},
which aims to compute the greedy optimal decision tree over the entire data
set and feature space as efficiently as possible.
These two approaches complement each other:
approximate training often devolves to exact
training on a subset of the data.

  As such, we
consider the task of efficient exact decision tree learning in the
context of boosting where our primary objective is to minimize the
number of examples that must be examined for any feature in order to
perform greedy-optimal decision tree training. 
Our method
is simple to implement, and gains in feature-example efficiency directly corresponds to improvements in computation time.

The main contributions of the paper are as follows:\vspace{-0.5\baselineskip}
\begin{itemize}
\item We develop a highly efficient algorithm for computing exact
  greedy-optimal decision trees, \texttt{Adaptive-Pruning Boost}, and
  we demonstrate through extensive experiments that our method
  outperforms the state-of-the-art \texttt{Quick Boost} method.
\item We develop a constrained-oracle framework for deriving
  feature-example lower bounds on the problem that applies to a wide
  family of conceivable algorithms for the task, including our
  algorithm and \texttt{Quick Boost}, and we demonstrate that our
  algorithm is near-optimal within this family of algorithms through
  extensive experiments.
\item Within the constrained-oracle framework, we also derive a
  feature-example lower bound applicable to any algorithm solving the
  task, and we demonstrate that our algorithm empirically achieves
  performance close to this lower bound as well.
\end{itemize}

We will next expand on the ideas that underlie
our three main results above and discuss related work.

\paragraph{The Multi-Armed Bandit (MAB) Inspiration.}
Our approach to efficiently splitting decision tree nodes is based on
identifying intervals which contain the score (e.g. classifier's training accuracy) of each possible split and tightening those
intervals by observing training examples incrementally.
We can eventually exclude entire features
from further consideration because their intervals do not
overlap the intervals of the best splits.
Under this paradigm, the optimal strategy would be to assess all
examples for the best feature,
reducing its interval to an exact value,
and only then to assess examples for the remaining features
to rule them out.
Of course, we do not know in advance which feature is best.
Instead, we wish to spend our assessments optimally to identify the
best feature with the fewest assessments spent on the other features.
This corresponds well to the best arm identification problem studied
in the MAB literature. This insight inspired our training algorithm.

A ``Pure Exploration'' MAB algorithm in the ``Fixed-Confidence'' setting
\citep{DBLP:conf/icml/KalyanakrishnanTAS12,NIPS2012_4640,COLT13}
is given a set of arms (probability distributions over rewards)
and returns the arm with highest expected reward with high probability
(subsequently, WHP)
while minimizing the number of samples drawn from each arm.
Such confidence interval algorithms are generally categorized as
LUCB (Lower Upper Confidence Bounds) algorithms, because at each round
they ``prune'' sub-optimal arms whose confidence intervals do not overlap
with the most promising arm's interval until it is confident that WHP
it has found the best arm. 

In contrast to the MAB setting where one estimates the expected reward
of an arm WHP,
in the Boosting setting one can calculate the exact (training) accuracy
of a feature (expected reward of an arm) if one is willing to assess that
feature on all training examples.
When only a subset of examples are assessed, one can also calculate a
non-probabilistic ``uncertainty interval'' which is guaranteed to contain
the feature's true accuracy. This interval shrinks in proportion to the boosting weight of the assessed examples. We specialize the generic LUCB-style MAB algorithm of the best arm identification to assess examples in decreasing order of boosting weights, and to use uncertainty intervals in place of the more typical probabilistic confidence intervals.

\paragraph{Our Lower Bounds.}
We introduce two empirical lower bounds on the total number of examples needed
to be assessed in order to identify the exact greedy-optimal node for a given
set of boosting weights.
Our first lower bound is for the class of algorithms which assess feature
accuracy by testing the feature on examples in order of decreasing Boosting
weights (we call this the \emph{assessment complexity} of the problem).
We show empirically that our algorithm's performance is consistently nearly
identical to this lower bound.
Our second lower bound permits examples to be assessed in any order.
It requires a feature to be assessed with the minimal set of examples
necessary to prove that its training accuracy is not optimal.
This minimal set depends on the boosting weights in a given round,
from which the best possible (weighted) accuracy across all weak hypotheses
is calculated.
For non-optimal features, the minimal set is then identified using
Integer Linear Programming.

\subsection{Related Work}\label{related-work}

Much effort has gone to reducing the overall computational complexity of
training Boosting models.
In the spirit of \citet{icml2013_appel13}, which has the state-of-the-art
exact optimal-greedy boosted decision tree training algorithm \texttt{Quick Boost}
(our main competitor), we divide these attempts into three categories and
provide examples of the literature from each category: reducing
1) the set of features to focus on;
2) the set of examples to focus on; and/or
3) the training time of decision trees.
Note that these categories are independent of and parallel to each other.
For instance, 3), the focus of this work, can build a decision tree from
any subset of features or examples.
We show improvements compared to state-of-the-art algorithm both on subsets
of the training data and on the full training matrix.
Popular approximate algorithms such as
XGBoost \citep{Chen:2016:XST:2939672.2939785}
typically focus on 1) and 2)
and could benefit from using our algorithm for their training step.

Various works \citep{4270071, PaulBiswajitAthithanEtAl} focus on reducing the set of features.
\citet{busafekete:in2p3-00614564} divides features into subsets
and at each round of boosting uses adversarial bandit models to find the most promising subset for boosting. \texttt{LazyBoost} \citep{Escudero:2001:ULW:2387364.2387381} samples a subset of features uniformly at random to focus on at a given boosting round. 

Other attempts at computational complexity reduction involve sampling a set of
examples.
Given a fixed budget of examples, \texttt{Laminating}
\citep{Dubout:2014:ASL:2627435.2638580} attempts to find the best among a set of
hypotheses by testing each surviving hypothesis on a increasingly larger set of
sampled examples while pruning the worst performing half and doubling the number of examples, until it is left
with one hypthesis. It returns this hypothesis to boosting as the best one with probability $1-\delta$. The hypothesis identification part of \texttt{Laminating} is fairly identical to the best arm identification algorithm \texttt{Sequential Halving} \citep{icml2013_karnin13}. \texttt{Stochastic Gradient Boost} \citep{FriedmanStochasticGB}, and the weight trimming approach of \citet{Friedman98additivelogistic} are a few other intances of reducing the set of examples. \texttt{FilterBoost} \citep{NIPS2007_3321} uses an oracle to sample a set of examples from a very large dataset and uses this set to train a weak learner.

Another line of research focuses on reducing the training time of decision trees
\citep{implementing-decision-trees-and-forests-on-a-gpu, articleWuEtAl}.
More recently, \citet{icml2013_appel13} proposed \texttt{Quick Boost}, which
trains decision tree as weak learners while pruning underperforming
features earlier than a classic Boosting algorithm would.
They build their algorithm on the insight that the (weighted) error rate of
a feature when trained on a subset of examples can be used to bound its error
rate on all examples.
This is because the error rate is simply the normalized sum of the weights of the misclassified examples; if one supposes that all unseen examples may be
correctly classified, that yields a lower bound on the error rate.
If this lower bound is above the best observed error rate of a feature
trained on all examples, the underperforming feature may be pruned and no
more effort spent on it.

Our \texttt{Adaptive-Pruning Boost} algorithm carries forward the ideas
introduced by \texttt{Quick Boost}.
In contrast to \texttt{Quick Boost},
our algorithm is parameter-free and adaptive.
Our algorithm uses fewer training examples and thus faster training CPU time than
\texttt{Quick Boost}.
It works by gradually adding weight to the
``winning'' feature with the smallest upper bound on, e.g., its error rate
and the ``challenger'' feature with smallest lower bound,
until all challengers are pruned.
We demonstrate consistent improvement over \texttt{Quick Boost} on a
variety of datasets,
and show that when speed improvements are more modest this is due to
\texttt{Quick Boost} approaching the lower bound more tightly rather than
due to our algorithm using more examples than are necessary.
Our algorithm is consistently nearly-optimal in terms of the lower bound
for algorithms which assess examples in weight order,
and this lower bound in turn is close to the global lower bound.
Experimentally, we show that the reduction in total assessed examples
also reduces the CPU time.

\section{Setup and Notation}
We adopt the setup, 
description and notation of~\citet{icml2013_appel13} for ease of
comparison.  

\paragraph {A Generic Boosting Algorithm.}
Boosting algorithms train a linear combination of classifiers
$\mathcal{H}_T(x)=\sum^T_t {\alpha_t h_t(x)}$
such that an error function $\mathcal{E}$ is minimized by optimizing scalar
$\alpha_t$ and the weak learner $h_t(x)$ at round $t$.
Examples $x_i$ misclassified by $h_t(x)$ are assigned ``heavy'' weights $w_i$
so that the algorithm focuses on these heavy weight examples when training weak
learner $h_{t+1}(x)$ in round $t+1$.
Decision trees, defined formally below, are often used as weak learners.

\paragraph {Decision Tree.}  A binary decision tree $h_{\textit{Tree}}(x)$ is a
tree-based classifier where every non-leaf node is a decision stump
$h(x)$.  A decision stump can be viewed as a tuple $(p, k, \tau)$ of a
polarity (either $+1$ or $-1$), the feature column index, and
threshold, respectively, which predicts a binary label from the set
$\{+1, -1\}$ for any input $x \in \mathbb{R}^K$ using the function
$h(x) \equiv p\mathop{\mathrm{sign}}(x[k] - \tau)$.

A decision tree $h_{\textit{Tree}}(x)$ is trained, top to bottom, by
``splitting'' a node, i.e. selecting a stump $h(x)$ that optimizes
some function such as error rate, information gain, or GINI impurity.  While this paper focuses on selecting stumps based on
error rate, we provide bounds for information gain in the supplementary material which can be used to split nodes on information gain.
Our algorithm \texttt{Adaptive-Pruning Stump} (Algorithm~\ref{adaptive-pruning-stump}), a subroutine of \texttt{Adaptive-Pruning Boost} (Algorithm~\ref{boosting}), trains a decision stump $h(x)$ with
fewer total example assessments than its analog, the subroutine of the-state-of-the-art algorithm
\texttt{Quick Boost}, does. Note that \texttt{Adaptive-Pruning Stump} used iteratively can train a decision tree, but for simplicity we assume our weak learners are binary decision stumps.
While we describe \texttt{Adaptive-Pruning Stump} for
binary classification, the reasoning also applies to multi-class data.

To describe how \texttt{Adaptive-Pruning Stump} trains a stump
we need a few definitions.
Let $n$ be the total number of examples, and $m \le n$ some number of examples
on which a stump has been trained so far.
We will assume that Boosting provides the examples in decreasing weight
order.
This order can be maintained in $O(n)$ time in the presence of Boosting weight updates
because examples which are correctly classified do not change their relative
weight order, and examples which are incorrectly classified do not change their
relative weight order; a simple merge of these two groups suffices.
We can therefore number our examples from 1 to $n$ in decreasing weight order.
Furthermore,
\begin{itemize}[topsep=0pt,itemsep=0pt]
\item let $Z_m := \sum_{i=1}^m {w_i}$ be sum of the weights of first $m$ (heaviest) examples, and
\item let $\epsilon_m := \sum_{i=1}^m {w_i \mathbbm{1} \{h(x_i) \ne y_i\}}$ be the sum of the weights of the examples from the first $m$ which are misclassified by the stump $h(x)$.
\end{itemize}

The weighted error rate for stump $j$ on the first $m$ examples is then
$E^j_m := \epsilon_m^j / Z_m$.

\section{Algorithm}\label{alg}
\texttt{Adaptive-Pruning Stump} prunes features based on
exact intervals (which we call uncertainty intervals) and returns the best feature
deterministically.
To do this we need lower bounds and upper bounds on the stump's
training error rate. 
Our lower bound assumes that all unseen examples are classified
correctly and our upper bound assumes that all unseen examples are classified
incorrectly.
We define $L_m^j$ as the lower bound on the error rate for
stump $j$ on all $n$ examples, when computed on the first $m$ examples,
and $U_m^j$ as the corresponding upper bound.
For any $1 \le m \le n$, we define,
using $c^j_i := \mathbbm{1} \{h_j(x_i) \ne y_i\}$ to indicate
whether stump $j$ incorrectly classifies example $i$,
\begin{align*}
	L_m^j :=
	 \frac{1}{Z_n}
		\sum_{i=1}^m {w_i c^j_i}
	\le  
	\underbrace{
	\frac{1}{Z_n} 
		\sum_{i=1}^n {w_i c^j_i}
	}_{E_n^j}
	\le  \frac{1}{Z_n}\left(
		\epsilon^j_m
		+ \sum_{i=m+1}^n w_i
		\right)
	=
	\frac{1}{Z_n}\left(
		\epsilon^j_m
		+ (Z_n - Z_m)
		\right)
		=: U_m^j.
\end{align*}
For any two stumps $i$ and $j$ when numbers $m$ and $m'$ exist
such that $L_m^i > U_{m'}^j$ then we can safely discard stump $i$, as it cannot
have the lowest error rate.
This extension of the pruning rule used by \citet{icml2013_appel13}
permits each feature to have its own interval of possible error
rates, and permits us to compare features for pruning without first
needing to assess all $n$ examples for any feature
(\texttt{Quick Boost}'s subroutine requires the
current-best feature to be tested on all $n$ examples).

Now we describe our algorithm in detail; see the
listing in Algorithm~\ref{adaptive-pruning-stump}.
We use $f_k$ to denote an object which stores all decision stumps $h(x)$ for feature
$x[k]$.
Recall that $x \in \mathbb{R}^K$ and that $x[k]$ is the $k_{th}$ feature of $x$,
for $k \in \{ 1, \dots, K \}$.
$f_k$ has method $assess(batch)$, when given a ``batch'' of examples, updates
$L_m$, $E_m$,  $U_m$ (defined above) for all decision stumps of feature $x[k]$ based
on the examples in the batch.
It also has methods $LB()$ and $UB()$, which report the $L_m$ and $U_m$ for the
single hypothesis with smallest error $E_m$ on the $m$ examples seen so far,
and $bestStump()$, which returns the hypothesis with smallest error $E_m$.

\texttt{Adaptive-Pruning Stump} proceeds until there is some feature $k^*$ whose upper bound
is below the lower bounds for all other features.
We then know that the best hypothesis uses feature $k^*$.
We assess any remaining unseen examples
for feature $k^*$ in order to identify the best threshold and polarity
and to calculate $E^{k^*}_n$.
Thus, our algorithm always finds the exact greedy-optimal hypothesis.

In order to efficiently compare two features $i$ and $j$ to decide whether
to prune feature $i$,
we want to ``add'' the minimum weight to these arms to
possibly obtain that $L_m^i > U_{m'}^j$.
The most efficient way to do this is to test each feature against
a batch of the heaviest unseen examples whose weight is at least
the gap $U_{m'}^j - L_m^i$.
This permits us to choose batch sizes adaptively, based on the minimum weight
needed to prune a feature given the current boosting weights and the
current uncertainty intervals for each arm.
We note that our ``weight order'' lower bound on the sample complexity of the
problem in the next section is also calculated based on this insight.
This is in contrast to \texttt{Quick Boost}, which accepts parameters to
specify the total number of batches and the weight to use for initial estimates;
the remaining weight is divided evenly among the batches.
When the number of batches chosen is too large, the run time of a training
round approaches $O(n^2)$; when it is too small, the run time approaches
that of assessing all $n$ examples.

\begin{wrapfigure}{R}{0.5\textwidth}
\vspace{-2.5em}
\begin{minipage}[t]{0.5\textwidth}
\begin{algorithm}[H]
\caption{\texttt{Adaptive-Pruning Stump}}\label{adaptive-pruning-stump}
\begin{algorithmic}
   \STATE {\bfseries Input:} Examples \{$x_1, \dots, x_n$\}, Labels \{$y_1, \dots, y_n$\}, Weights $\{w_{1},\dots,w_{n}\}$
    \STATE {\bfseries Output:} $h(x)$  
    \STATE $m \gets$ min. index s.t. $Z_m \ge 0.5$
    \FOR {$k = 1$ {\bfseries to} $K$}
        \STATE $f_k.assess([x_1, \dots, x_m]); m_k \gets m$
    \ENDFOR
    \STATE $a \gets k$ with min $f_k.UB()$
    \STATE $b \gets k \ne a$ with min $f_k.LB()$
    \WHILE {$f_a.UB() > f_b.LB()$}
        \STATE $gap \gets f_a.UB() - f_b.LB()$
        \STATE $m \gets$ min index s.t. $Z_m \ge Z_{m_a} + gap$
        \STATE $f_a.assess([x_{m_a+1}, \dots, x_m]); m_a \gets m$
        \STATE $gap \gets f_a.UB() - f_b.LB()$
        \IF {$gap > 0$}
            \STATE $m \gets$ min index s.t. $Z_m \ge Z_{m_b} + gap$
            \STATE $f_b.assess([x_{m_b+1}, \dots, x_m]); m_b \gets m$
        \ENDIF
        \IF {$f_a.UB() < f_b.UB()$}
            \STATE $a \gets b$
        \ENDIF
        \STATE $b \gets k \ne a$ with min $f_k.LB()$
    \ENDWHILE
    \STATE {\bfseries return} $h(x) := f_a.bestStump()$
\end{algorithmic}
\end{algorithm}
\end{minipage}
\begin{minipage}[t]{0.5\textwidth}
\begin{algorithm}[H]
\caption{\texttt{Adaptive-Pruning Boost}}\label{boosting}
\begin{algorithmic}
   \STATE {\bfseries Input:} Instances \{$x_1, \dots, x_n$\}, Labels \{$y_1, \dots, y_n$\}
   \STATE {\bfseries Output:} $\mathcal{H}_T(x)$
   \STATE {\bfseries Initialize Weights:} $\{w_{1},\dots,w_{n}\}$   \FOR{$t=1$ {\bfseries to} $T$}
   \STATE Train Decision Tree $h_{Tree}(x)$ one node at a time by calling \texttt{Adaptive-Pruning Stump}
   \STATE Choose $\alpha_t$ and update $\mathcal{H}_t(x)$
   \STATE Update and Sort (in descending order) $w$
   \ENDFOR
\end{algorithmic}
\end{algorithm}
\end{minipage}
\vspace{-6em}
\end{wrapfigure}

At each round, \texttt{Adaptive-Pruning Boost} trains a decision tree
in Algorithm~\ref{boosting} by calling the subroutine
\texttt{Adaptive-Pruning Stump} of Algorithm~\ref{adaptive-pruning-stump}.

\paragraph{Implementation Details.}
The $f_k.assess()$ implementation is shared across all algorithms.
 For $b$ batches of exactly $m$ examples each on a feature $k$ with $v$ distinct
 values, our implementation of $f_k.assess$ takes $O(b m \log (m + v))$
 operations.
 We maintain an ordered list of intervals of thresholds for each feature with
 the feature values for the examples assessed so far lying on the interval
 boundaries.
 Any threshold in the interval will thus have the same performance on all
 examples assessed so far.
 To assess a batch of examples, we sort the examples in the batch by feature
 value and then split intervals as needed and calculate scores for the
 thresholds on each interval in time linear in the batch size and number of
 intervals.

Note also that maintaining the variables $a$ and $b$ requires a
single heap, and that in many iterations of the \texttt{while} loop we can
update these variables from the heap in constant time
(e.g. when $b$ has not changed, when $a$ and $b$ are simply swapped,
or when $b$ can be pruned).

\section{Lower Bounds}\label{lb}

\begin{wrapfigure}{R}{0.5\textwidth}
\centering
\begin{subfigure}{.5\textwidth}
\includegraphics[width=\linewidth]{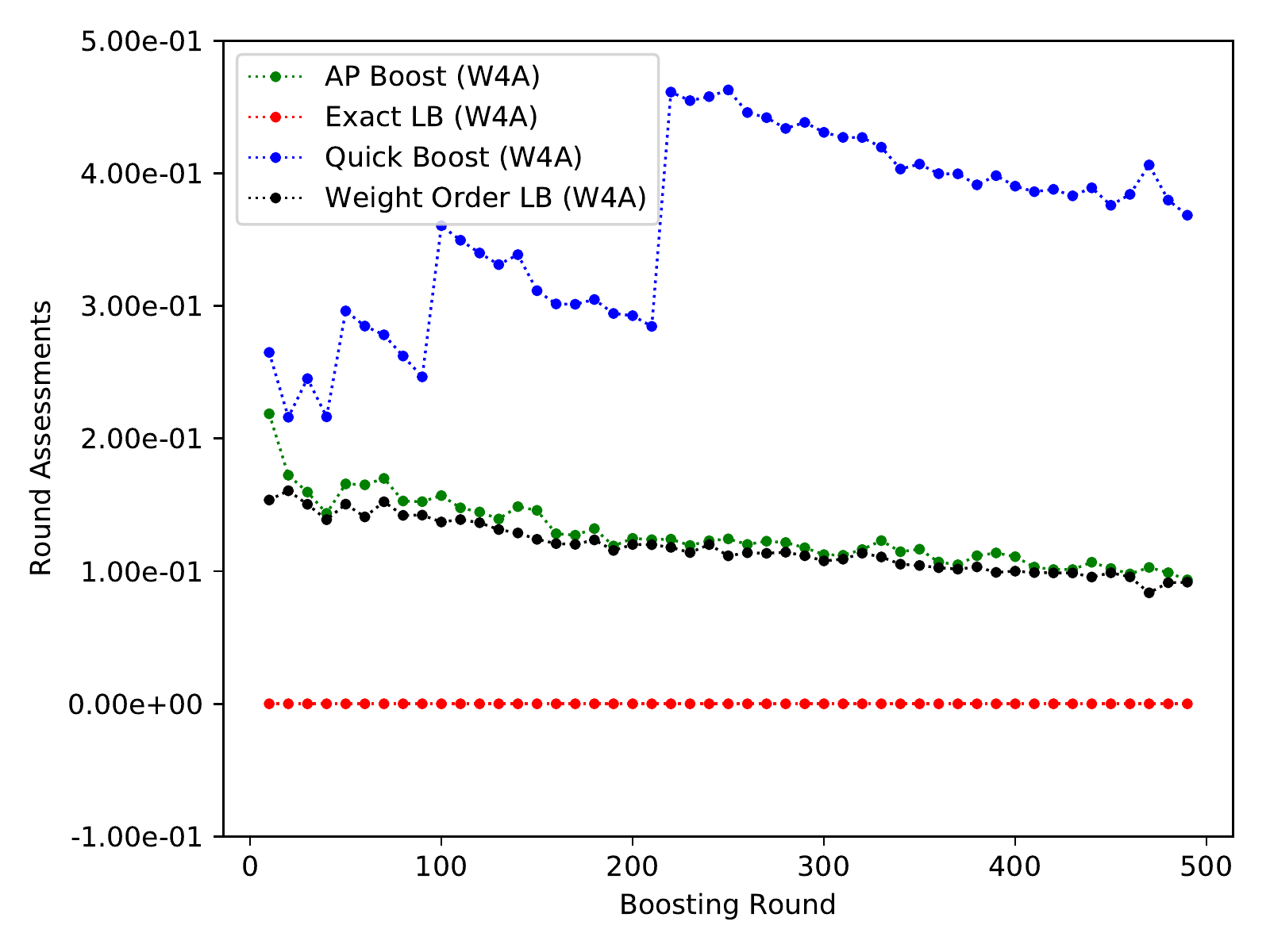}
\end{subfigure}
\begin{subfigure}{.5\textwidth}
\includegraphics[width=\linewidth]{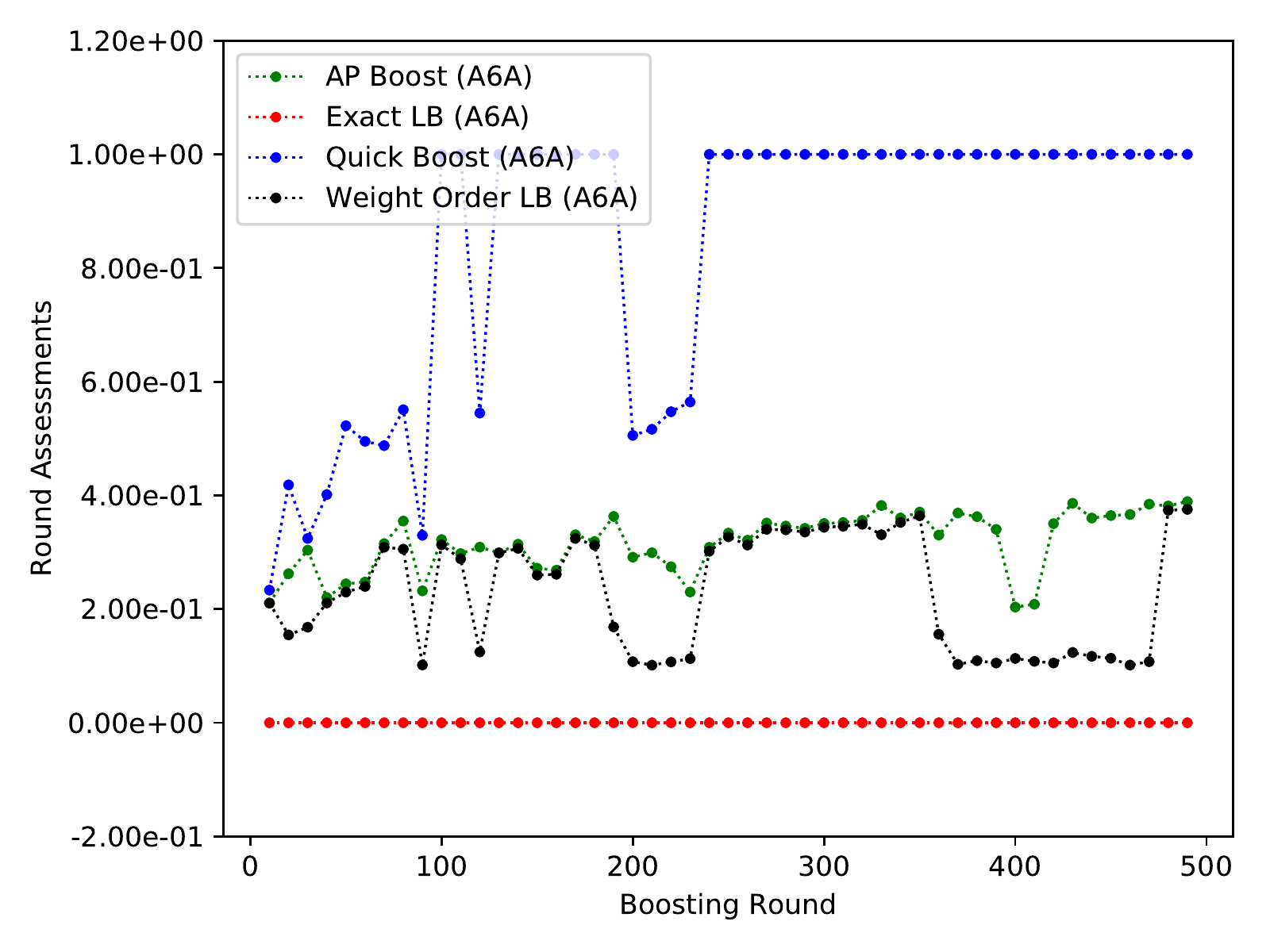}
\end{subfigure}
\caption{Lower Bounds versus Upper Bounds. Datasets W4A (top) and A6A (bottom) were used with trees of depth 1.
The y-axis is the \emph{fraction of the gap}
between the exact lower bound
(at zero) and the full corpus size (at one) which an algorithm used in a
given round.
Non-cumulative example assessments are plotted for every 10 rounds.
}
\label{fig:elb}
\vspace{-3em}
\end{wrapfigure}

We compare \texttt{Adaptive-Pruning Boost} against two lower bounds, defined empirically based
on the boosting weights in a given round.
In our \emph{weight order lower bound}, we consider the minimum number
of examples required to determine that a given feature is underperforming
with the assumption that examples will be assessed in order of decreasing
boosting weight.
Our \emph{exact lower bound} permits examples to be assessed in any order,
and so bounds any possible algorithm which finds the best-performing feature.

\paragraph{Weight Order Lower Bound.}
For this bound, we first require that \texttt{Adaptive-Pruning Stump} selects the feature with
minimal error.
In the case of ties, an optimal feature may be chosen arbitrarily.
\texttt{Adaptive-Pruning Stump} need to assess every example for the returned
feature in order for \texttt{Adaptive-Pruning Boost} to calculate $\alpha$ and update weights $w$ , so the lower
bound for the returned feature is simply the total number of examples $n$.

Let $k^*$ be the returned feature, and $E^*$ its error rate
when assessed on all $n$ examples.
For any feature $k \ne k^*$ which is not returned, we need to prove that it is
underperforming (or tied with the best feature).
Let $J_k$ be the set of decision stumps which use feature $k$;
then we need to find the smallest value $m$ such that for all stumps
$j \in J_k$, we have $L^j_m \ge E^*$.
Our lower bound is simply 
$	LB_{wo} :=
	n + \sum_{k \ne k^*} \min \Set{m : \forall j \in J_k, L^j_{m} \ge E^*}$.
We present results in Figure~\ref{fig:wolb} showing that \texttt{Adaptive-Pruning Boost}
achieves this bound on a variety of datasets.
\texttt{Quick Boosting}, in contrast, sometimes approaches
this bound but often uses more examples than necessary.

\paragraph{Exact Lower Bound.}
In order to test the idea that adding examples in weight order is nearly optimal,
and to provide a lower bound on \emph{any} algorithm which finds the
optimal stump, we also present an exact lower bound on the problem.
Like the weight order lower bound, this bound is defined in terms of the boosting
weights in a given round; unlike it, examples may be assessed in any order.
It is not clear how one might achieve the exact lower bound without incurring an
additional cost in time.
We leave such a solution to future work.
However, we show in Figure~\ref{fig:elb} that this bound is, in fact,
very close to the weight order lower bound.

For the exact lower bound, we still require the selected feature $k^*$ to be
assessed against all examples; this is imposed by the boosting algorithm.
For any other feature $k \ne k^*$, we simply need the size of the smallest
set of examples which would prune the feature (or prove it is tied with $k^*$).
We will use $M \subseteq \Set{1,\dots,n}$ to denote a set of indexes of examples
assessed for a given feature,
and $L^j_M$ to denote the lower bound of stump $j$ when assessed on the
examples in subset $M$.
This bound, then, is
$	LB_{exact} :=
	n + \sum_{k \ne k^*} \min_{M : L^j_M \ge E^*} |M|$.

We identify the examples included in the smallest subset $M$
for a given feature $k \ne k^*$ using integer linear programming.
We define binary variables $c_1,\dots,c_n$, where $c_i$ indicates whether example
$i$ is included in the set $M$.
We then create a constraint for each stump $j \in J_k$ defined for feature $k$
which requires that the stump be proven underperforming.
Our program, then, is:
$
	\texttt{Minimize } 
	 \sum_{i=1}^n c_i
	\texttt{ s.t. }
	c_i \in \Set{0,1}
		~~~ \forall i 
, \texttt{ and }
	\sum_{i=1}^n c_i w_i \1{ h_j(x_i) \ne y_i }
		\ge E^*
		~~~ \forall j \in J_k$.

\paragraph{Discussion.}
Figure~\ref{fig:elb} shows a non-cumulative comparison of our weight order lower bound to the global lower bound. Minimizing the global lower bound function mentioned above is computationally expensive. For this reason we used binary class datasets of moderate size and trees of depth 1 as weak leaners, but we have no reason to believe that the technique would not work for deeper trees and multi-class datasets. Refer to Table~\ref{datasets} for details of datasets.
The weight order lower bound and \texttt{Adaptive-Pruning Boost} 
are within 10-20\% of the exact lower bound,
but \texttt{Quick Boost} often uses half to all of the unnecessary
training examples in a given round.

\section{Experiments}
We experimented with shallow trees on various binary and multi-class datasets.
We report both assessment complexity and CPU time complexity for each dataset. Though \texttt{Adaptive-Pruning Boost} is a general Boosting algorithm, we experimented with the following class of algorithms (1) Boosting exact greedy-optimal decision trees and (2) Boosting approximate decision trees.

Each algorithm was run with either the state-of-the-art method (\texttt{Quick Boost}) or our decision tree training method (\texttt{Adaptive-Pruning Boost}), apart from the case of Figure~\ref{fig:wolb} that also uses the brute-force decision tree search method (\texttt{Classic AdaBoost}). The details of our datasets are in Table~\ref{datasets}. For datasets SATIMAGE, W4A, A6A, and RCV1 tree depth of three was used and for MNIST Digits tree depth of four was used (as in \citet{icml2013_appel13}). Train and test error results are provided as supplementary material.

\begin{table*}[ht]
\caption{The datasets used in our experiments.}
\label{datasets}
\vskip 0.15in
\begin{center}
\begin{small}
\begin{sc}
\begin{tabular}{llccc}
\toprule
Dataset & Source & Train / Test Size & Total Features & Classes\\
\midrule
a6a & \citet{Platt:1999:FTS:299094.299105} & 11220 / 21341 & 123 & 2 \\
MNIST Digits & \citet{726791} & 60000 / 10000 & 780 & 10 \\
rcv1 (Binary) & \citet{Lewis:2004:RNB:1005332.1005345} & 20242 / 677399 & 47236 & 2 \\
satimage & \citet{991427} & 4435 / 2000 & 36 & 6 \\
w4a & \citet{Platt:1999:FTS:299094.299105} & 7366 / 42383 & 300 & 2 \\
\bottomrule
\end{tabular}
\end{sc}
\end{small}
\end{center}
\vskip -0.1in
\end{table*}

\paragraph{Boosting Exact Greedy-Optimal Decision Trees.}
We used \texttt{AdaBoost} for exact decision tree training.
Figure~\ref{fig:wolb} shows the total number of example assessments used by AdaBoost when it uses three different decision trees building methods described above. In all of these experiments, our algorithm, \texttt{Adaptive-Pruning Boost}, not only consistently beats \texttt{Quick Boost} but it also almost matches the weight order lower bound. The \texttt{Classic AdaBoost} can be seen as the upper bound on the total number of example assessments.

Table~\ref{complexity-exact-results} shows that CPU time improvements correspond to example-assessments improvements for \texttt{Adaptive-Pruning Boost} for all our datasets, except for RCV1. This could be explained by Figure~\ref{fig:wolb} wherein \texttt{Quick Boost} is seen approaching the lower bound for this particular dataset. While \texttt{Adaptive-Pruning Boost} is closer to the lower bound, its example-assessments improvements are not enough to translate to CPU time improvements.

 \begin{figure*}[ht]
\centering
\begin{subfigure}{.31\linewidth}
\includegraphics[width=\linewidth]{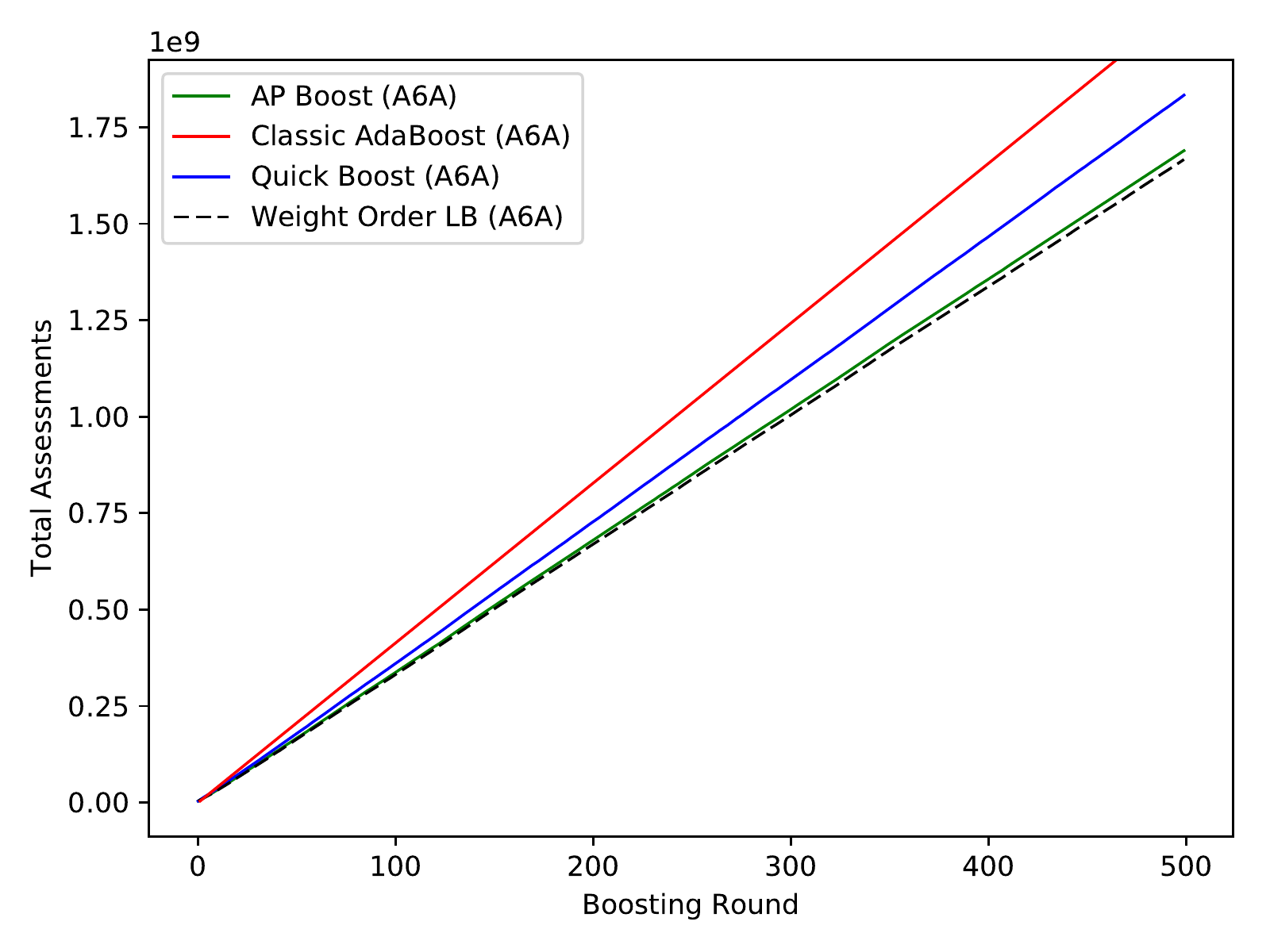}
\end{subfigure}
\begin{subfigure}{.31\linewidth}
\includegraphics[width=\linewidth]{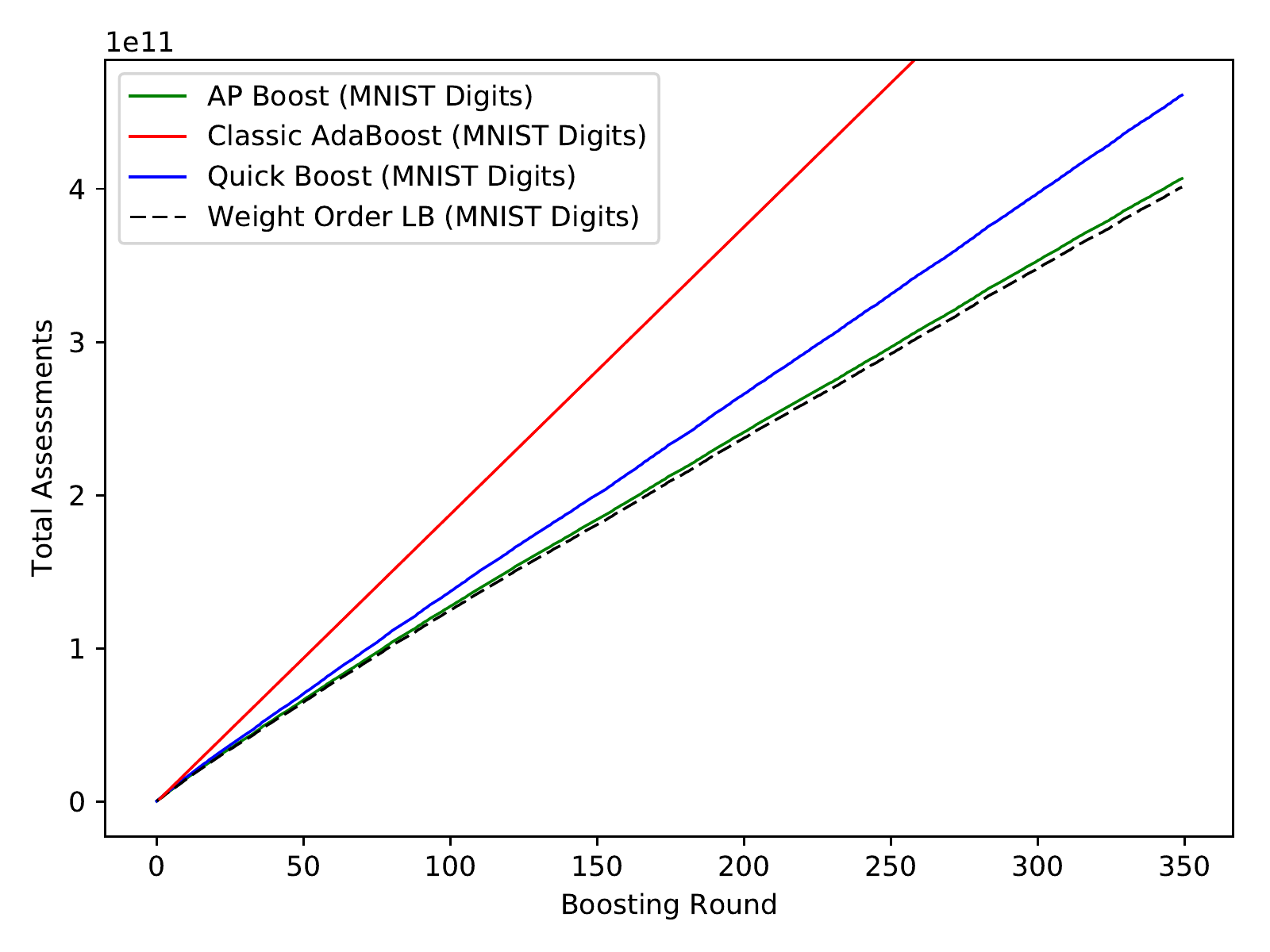}
\end{subfigure}
\begin{subfigure}{.31\linewidth}
\includegraphics[width=\linewidth]{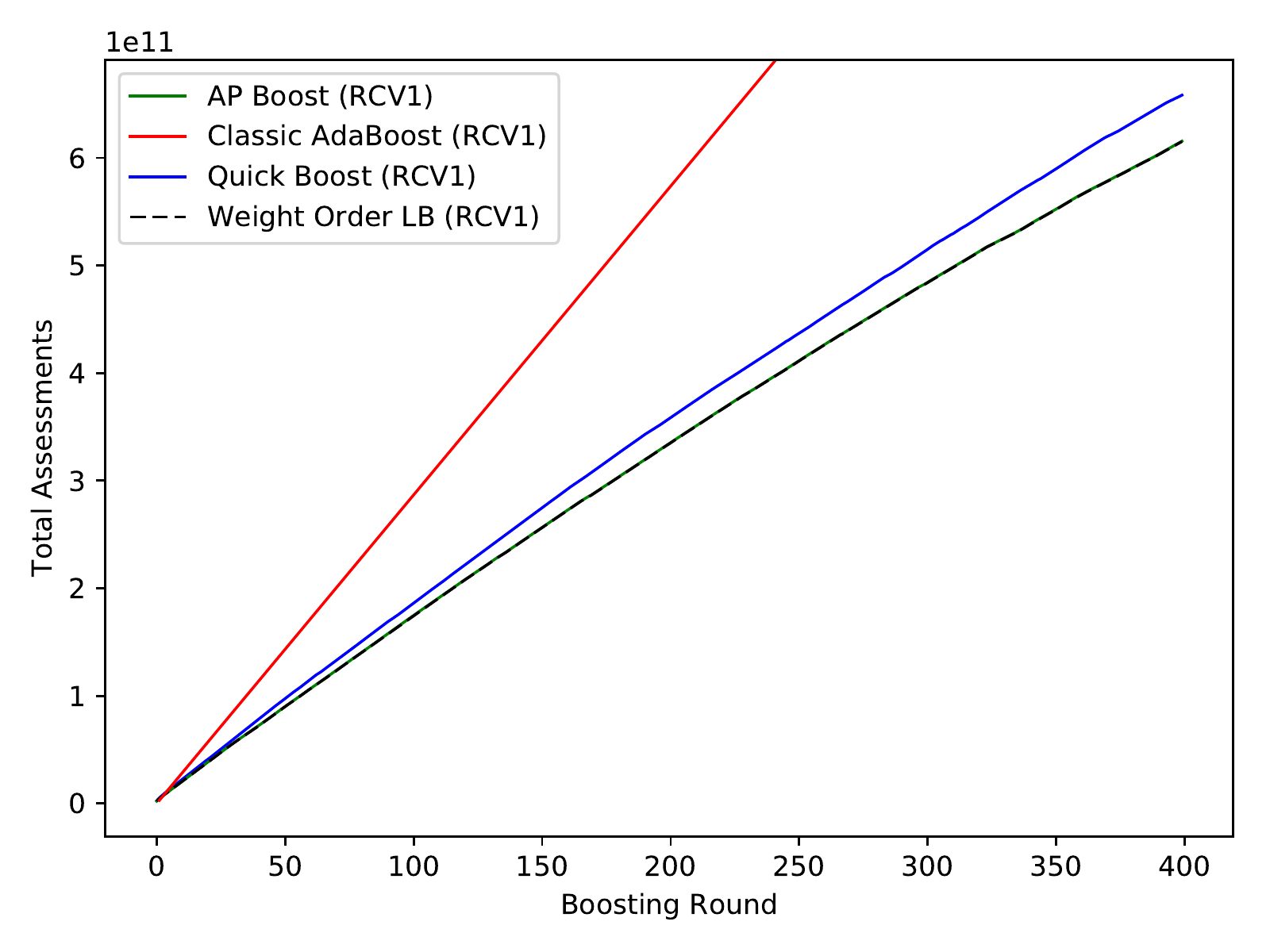}
\end{subfigure}
\begin{subfigure}{.31\linewidth}
\includegraphics[width=\linewidth]{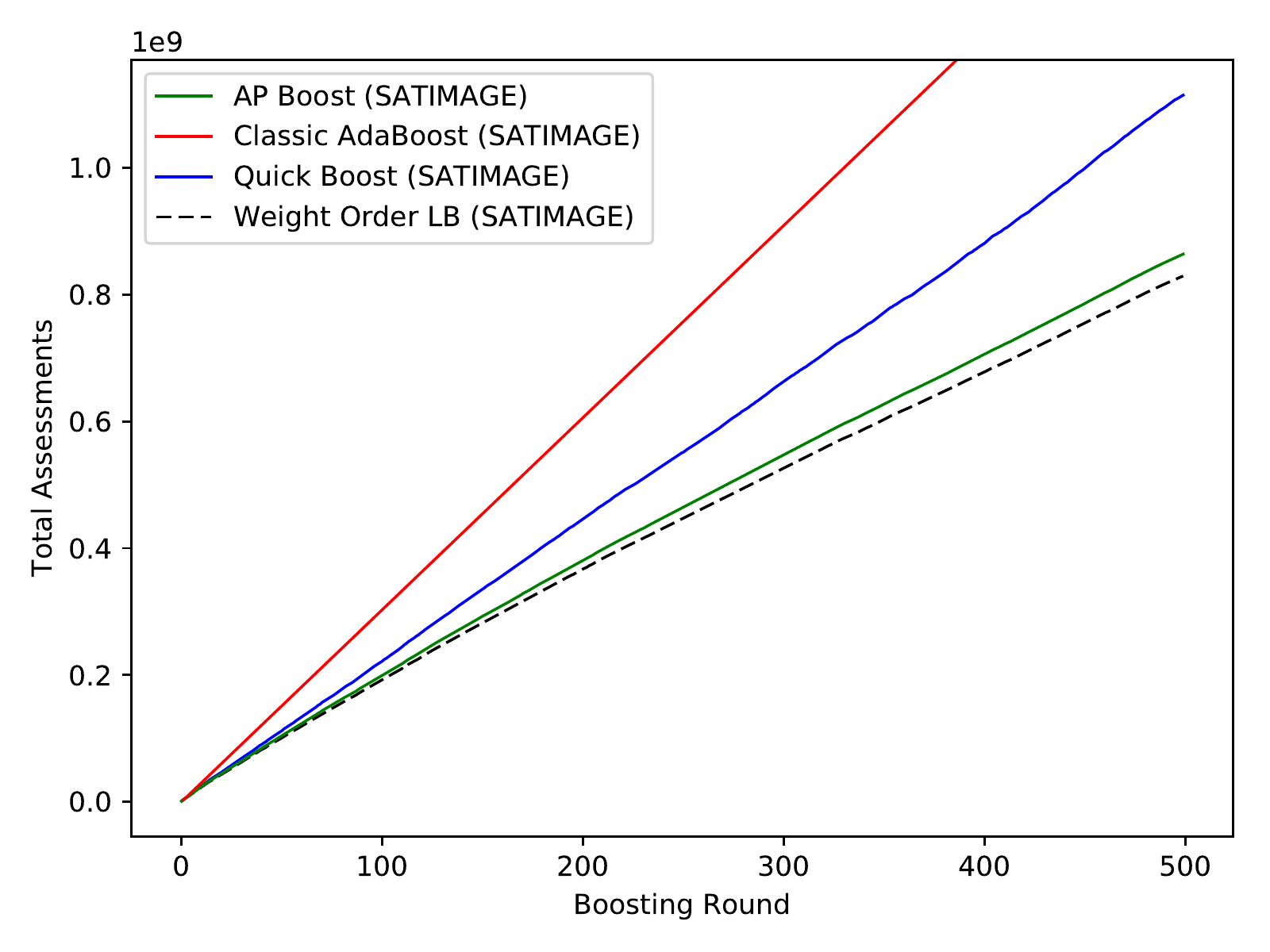}
\end{subfigure}
\begin{subfigure}{.31\linewidth}
\includegraphics[width=\linewidth]{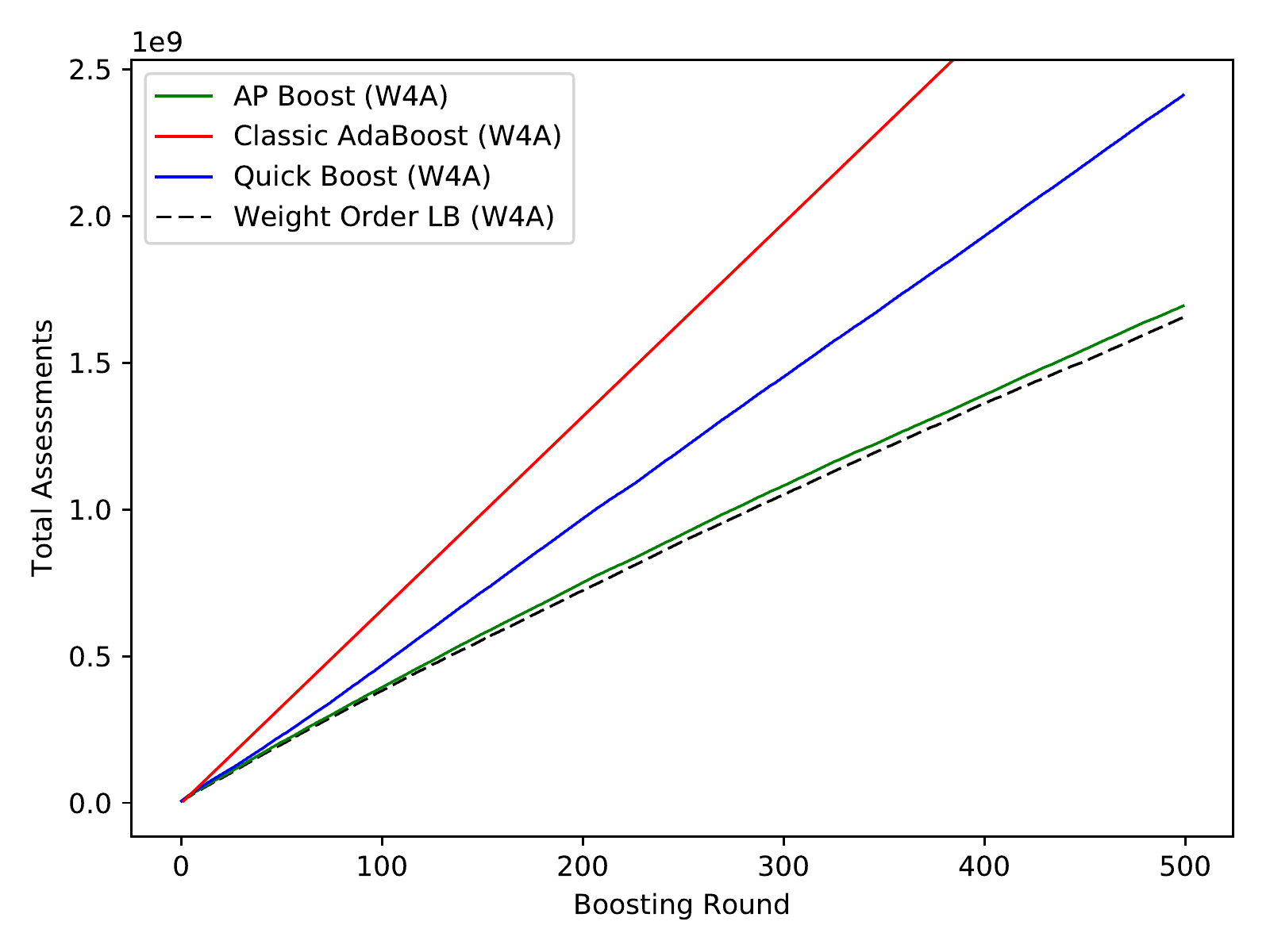}
\end{subfigure}
\caption{We report the total number of assessments at various boosting rounds used by the algorithms, as well as the weight order lower bound. In all of these experiments, our algorithm, \texttt{AP Boost}, not only consistently beats \texttt{Quick Boost} but it also almost matches the lower bound.}
\label{fig:wolb}
\end{figure*}

\begin{table*}[ht]
\caption{Computational Complexity for AdaBoost.
All results are for 500 rounds of boosting except MNIST (300 rounds) and RCV1 (400 rounds).}
\label{complexity-exact-results}	
\vskip 0.15in
\begin{center}
\begin{small}
\begin{sc}
\begin{tabular}{llccrccr}
\toprule
& & \multicolumn{3}{c}{CPU Time in Seconds} & \multicolumn{3}{c}{\# Example Assessments} \\
Dataset & Boosting & AP-B & QB & Improv. & AP-B & QB & Improv. \\
\midrule
a6a & AdaBoost & 4.49e+02 & 4.46e+02 & 5.3\% & 1.69e+09 & 1.83e+09 & 7.8\% \\
mnist & AdaBoost & 6.32e+05 & 6.60e+05 & 4.2\% & 3.52e+11 & 3.96e+11 & 11.1\% \\
rcv1 & AdaBoost & 1.58e+05 & 1.58e+05 & -0.5\% & 6.15e+11 & 6.58e+11 & 6.5\% \\
satimage & AdaBoost & 9.21e+02 & 1.19e+03 & 18.9\% & 8.64e+08 & 1.11e+09 & 22.5\% \\
w4a & AdaBoost & 3.03e+02 & 3.96e+02 & 27.1\% & 1.69e+09 & 2.41e+09 & 29.8\% \\
\midrule
Mean &  & &  & 11\% & & & 15.54\% \\
\bottomrule
\end{tabular}
\end{sc}
\end{small}
\end{center}
\vskip -0.1in
\end{table*}

\paragraph{Boosting Approximate Decision Trees.}
We used two approximate boosting algorithms.
We experimented with Boosting with Weight-Trimming 90\% and 99\% \citep{Friedman98additivelogistic}, wherein the weak hypothesis is trained only on 90\% or 99\% of the weights, and LazyBoost 90\% and 50\% \citep{Escudero:2001:ULW:2387364.2387381} wherein the weak hypothesis is trained only on 90\% or 50\% randomly selected features. Table~\ref{complexity-approx-results} shows that the CPU time improvements correspond to assessment improvements.

Note that approximate algorithms like XGBoost of \cite{Chen:2016:XST:2939672.2939785} are not competitors to \texttt{Adaptive-Pruning Boost} but rather potential ``clients'' because such algorithms train on a subset of the data. Therefore, they are not appropriate baselines to our method.
\begin{table*}[ht]
\caption{Computational Complexity for LazyBoost and Boosting with Weight Trimming.
All results are for 500 rounds of boosting except MNIST (300 rounds) and RCV1 (400 rounds).}
\label{complexity-approx-results}	
\vskip 0.15in
\begin{center}
\begin{small}
\begin{sc}
\begin{tabular}{llccrccr}
\toprule
& & \multicolumn{3}{c}{CPU Time in Seconds} & \multicolumn{3}{c}{\# Example Assessments} \\
Dataset & Boosting & AP-B & QB & Improv. & AP-B & QB & Improv. \\
\midrule
a6a & LazyBoost (0.5) & 1.86e+02 & 1.95e+02 & 4.8\% & 8.48e+08 & 9.22e+08 & 8.1\%  \\
mnist & LazyBoost (0.5)  & 3.46e+05 & 3.52e+05 & 1.8\% & 1.87e+11 & 2.07e+11 & 9.7\% \\
rcv1 & LazyBoost (0.5) & 7.86e+04 & 7.54e+04 & -4.2\% & 3.18e+11 & 3.29e+11 & 3.4\% \\
satimage & LazyBoost (0.5) & 4.70e+02 & 5.48e+02 & 14.2\%  & 5.17e+08 & 6.11e+08 & 15.4\%  \\
w4a & LazyBoost (0.5) & 1.15e+02 & 1.58e+02 & 26.8\% & 8.61e+08 & 1.22e+09 & 29.3\% \\
\midrule
Mean &  & &  & 8.68\% & & & 13.18\% \\
\midrule
a6a & LazyBoost (0.9) & 3.28e+02 & 3.48e+02 & 5.6\%  & 1.51e+09 & 1.64e+09 & 7.7\%  \\
mnist & LazyBoost (0.9) & 5.89e+05 & 6.09e+05 & 3.3\% & 3.20e+11 & 3.59e+11 & 10.9\% \\
rcv1 & LazyBoost (0.9) & 1.38e+05 & 1.37e+05 & -1.0\% & 5.60e+11 & 5.93e+11 & 5.6\% \\
satimage & LazyBoost (0.9) & 7.37e+02 & 8.89e+02 & 17.1\%  & 8.05e+08 & 1.01e+09 & 20\%  \\
w4a & LazyBoost (0.9) & 2.04e+02 & 2.82e+02 & 27.7\% & 1.52e+09 & 2.19e+09 & 30.5\% \\
\midrule
Mean &  & &  & 10.54\% & & & 14.94\% \\
\midrule
a6a & Wt. Trim (0.9) & 2.69e+02 & 2.69e+02 & 0\%  & 1.23e+09 & 1.24e+09 & 1.4\%  \\
mnist & Wt. Trim (0.9) & 6.42e+05 & 8.02e+05 & 19.9\% & 4.61e+11 & 4.61e+11 & 0\% \\
rcv1 & Wt. Trim (0.9) & 8.87e+04 & 8.95e+04 & 0.9\%  & 3.65e+11 & 3.79e+11 & 3.6\% \\
satimage & Wt. Trim (0.9) & 9.87e+02 & 9.76e+02 & -1.2\%  & 1.26e+09 & 1.26e+09 & 0.1\%  \\
w4a & Wt. Trim (0.9) & 1.88e+02 & 1.96e+02 & 4.1\% & 1.40e+09 & 1.43e+09 & 2.5\% \\
\midrule
Mean &  & &  & 4.74\% & & & 1.52\% \\
\midrule
a6a & Wt. Trim (0.99) & 3.34e+02 & 3.38e+02 & 1.3\%  & 1.54e+09 & 1.58e+09 & 2.6\%  \\
mnist & Wt. Trim (0.99) &  5.80e+05 & 5.69e+05 & -1.8\% & 3.16e+11 & 3.37e+11 & 6.1\% \\
rcv1 & Wt. Trim (0.99) & 1.38e+05 & 1.37e+05 & -1.0\% & 5.61e+11 & 5.86e+11 & 4.4\% \\
satimage & Wt. Trim (0.99) & 6.49e+02 & 6.68e+02 & 2.9\%  & 7.01e+08 & 7.39e+08 & 5.1\%  \\
w4a & Wt. Trim (0.99) & 1.91e+02 & 2.03e+02 & 6.0\% & 1.44e+09 & 1.52e+09 & 5.3\% \\
\midrule
Mean &  & &  & 1.48\% & & & 4.7\% \\
\bottomrule
\end{tabular}
\end{sc}
\end{small}
\end{center}
\vskip -0.1in
\end{table*}

\section{Conclusion}
In this paper, we introduced an efficient exact greedy-optimal algorithm, \texttt{Adaptive-Pruning Boost}, for boosted decision trees. Our experiments on various datasets show that our algorithm use fewer total example assessments compared to the-state-of-the-art algorithm \texttt{Quick Boost}. We further showed that \texttt{Adaptive-Pruning Boost} almost matches the lower bound for its class of algorithms and the global lower bound for any algorithm.

\clearpage

\bibliography{paper, biblioEmilie}
\bibliographystyle{plainnat}

\clearpage

\begin{appendices}
\section{Additional Results}
\subsection{Train and Test Error for AdaBoost}
Table~\ref{tbl:AdaBoostTestTrainErrors} reports test and train errors at various Boosting rounds. Our algorithm achieves the test and train error in fewer total number of example assessments, compared to \texttt{Quick Boost}. Note that both algorithms, except in the case of RCV1, have the same test and train error at a given round, as they should because both train identical decision trees. The case of RCV1 is due to the algorithms picking a weak learner arbitrarily in case of ties, without changing the overall results significantly.
\begin{table*}[ht]
\caption{AdaBoost results,
reported at rounds 100, 300 and 500 (400 for RCV1).}
\label{tbl:AdaBoostTestTrainErrors}
\vskip 0.15in
\begin{center}
\begin{small}
\begin{sc}
\begin{tabular}{lccccccccc}
\toprule
	& \multicolumn{3}{c}{100}
	& \multicolumn{3}{c}{300}
	& \multicolumn{3}{c}{400/500}
	\\
Alg: Data & \# Assess. & Train & Test & \# Assess. & Train & Test & \# Assess. & Train & Test \\
\midrule
AP-B: a6a & 3.35e+08 & 0.142 & 0.155 & 1.02e+09 & 0.131 & 0.157 & 1.69e+09 & 0.128 & 0.160 \\
QB: a6a & 3.57e+08 & 0.142 & 0.155 & 1.09e+09 & 0.131 & 0.157 & 1.83e+09 & 0.128 & 0.160 \\
AP-B: mnist & 1.26e+11 & 0.106 & 0.111 & 3.52e+11 & 0.057 & 0.064 & --- & --- & --- \\
QB: mnist & 1.36e+11 & 0.106 & 0.111 & 3.96e+11 & 0.057 & 0.064 & --- & --- & --- \\
AP-B: rcv1 & 1.73e+11 & 0.027 & 0.059 & 4.83e+11 & 0.005 & 0.047 & 6.15e+11 & 0.001 & 0.044 \\
QB: rcv1 & 1.85e+11 & 0.029 & 0.061 & 5.13e+11 & 0.004 & 0.047 & 6.58e+11 & 0.001 & 0.046 \\
AP-B: satimage & 1.98e+08 & 0.113 & 0.150 & 5.46e+08 & 0.070 & 0.121 & 8.64e+08 & 0.049 & 0.109 \\
QB: satimage & 2.20e+08 & 0.113 & 0.150 & 6.61e+08 & 0.070 & 0.121 & 1.11e+09 & 0.049 & 0.109 \\
AP-B: w4a & 3.92e+08 & 0.011 & 0.019 & 1.07e+09 & 0.006 & 0.018 & 1.69e+09 & 0.006 & 0.018 \\
QB: w4a & 4.64e+08 & 0.011 & 0.020 & 1.45e+09 & 0.006 & 0.018 & 2.41e+09 & 0.006 & 0.018 \\
\bottomrule
\end{tabular}
\end{sc}
\end{small}
\end{center}
\vskip -0.1in
\end{table*}

\subsection{Train and Test Error for LazyBoost and Weight Trimming}
\begin{table*}[ht]
\caption{Performance for A6A}
\label{tbl:perf-a6a}
\vskip 0.15in
\begin{center}
\begin{small}
\begin{sc}
\begin{tabular}{lccccccccc}
\toprule
	& \multicolumn{3}{c}{100}
	& \multicolumn{3}{c}{300}
	& \multicolumn{3}{c}{500}
	\\
 & \# Assess. & Train & Test & \# Assess. & Train & Test & \# Assess. & Train & Test \\
\midrule
AP LazyBoost (0.5) & 1.69e+08 & 0.145 & 0.156 & 5.11e+08 & 0.134 & 0.159 & 8.48e+08 & 0.129 & 0.160 \\
QB LazyBoost (0.5) & 1.80e+08 & 0.145 & 0.157 & 5.50e+08 & 0.137 & 0.158 & 9.22e+08 & 0.132 & 0.160 \\
AP LazyBoost (0.9) & 2.99e+08 & 0.141 & 0.156 & 9.07e+08 & 0.133 & 0.157 & 1.51e+09 & 0.130 & 0.159 \\
QB LazyBoost (0.9) & 3.18e+08 & 0.141 & 0.156 & 9.75e+08 & 0.133 & 0.157 & 1.64e+09 & 0.130 & 0.159 \\
AP Wt. Trim (0.9) & 2.45e+08 & 0.151 & 0.157 & 7.35e+08 & 0.151 & 0.157 & 1.23e+09 & 0.151 & 0.157 \\
QB Wt. Trim (0.9) & 2.49e+08 & 0.151 & 0.157 & 7.46e+08 & 0.151 & 0.157 & 1.24e+09 & 0.151 & 0.157 \\
AP Wt. Trim (0.99) & 3.16e+08 & 0.141 & 0.156 & 9.34e+08 & 0.132 & 0.157 & 1.54e+09 & 0.126 & 0.158 \\
QB Wt. Trim (0.99) & 3.28e+08 & 0.141 & 0.156 & 9.62e+08 & 0.132 & 0.157 & 1.58e+09 & 0.126 & 0.160 \\
\bottomrule
\end{tabular}
\end{sc}
\end{small}
\end{center}
\vskip -0.1in
\end{table*}

\begin{table*}[ht]
\caption{Performance for MNIST Digits}
\label{tbl:perf-mnist}
\vskip 0.15in
\begin{center}
\begin{small}
\begin{sc}
\begin{tabular}{lccccccccc}
\toprule
	& \multicolumn{3}{c}{100}
	& \multicolumn{3}{c}{200}
	& \multicolumn{3}{c}{300}
	\\
 & \# Assess. & Train & Test & \# Assess. & Train & Test & \# Assess. & Train & Test \\
\midrule
AP LazyBoost (0.5) & 6.65e+10 & 0.150 & 0.145 & 1.28e+11 & 0.098 & 0.098 & 1.87e+11 & 0.076 & 0.079 \\
QB LazyBoost (0.5) & 7.07e+10 & 0.150 & 0.145 & 1.39e+11 & 0.098 & 0.098 & 2.07e+11 & 0.076 & 0.079 \\
AP LazyBoost (0.9) & 1.17e+11 & 0.117 & 0.118 & 2.22e+11 & 0.079 & 0.085 & 3.20e+11 & 0.061 & 0.069 \\
QB LazyBoost (0.9) & 1.25e+11 & 0.117 & 0.118 & 2.43e+11 & 0.079 & 0.085 & 3.59e+11 & 0.061 & 0.069 \\
AP Wt. Trim (0.9) & 1.53e+11 & 0.901 & 0.901 & 3.07e+11 & 0.901 & 0.901 & 4.61e+11 & 0.901 & 0.901 \\
QB Wt. Trim (0.9) & 1.53e+11 & 0.900 & 0.901 & 3.07e+11 & 0.900 & 0.901 & 4.61e+11 & 0.900 & 0.901 \\
AP Wt. Trim (0.99) & 1.19e+11 & 0.117 & 0.124 & 2.21e+11 & 0.076 & 0.080 & 3.16e+11 & 0.062 & 0.068 \\
QB Wt. Trim (0.99) & 1.29e+11 & 0.115 & 0.117 & 2.37e+11 & 0.074 & 0.078 & 3.37e+11 & 0.056 & 0.061 \\
\bottomrule
\end{tabular}
\end{sc}
\end{small}
\end{center}
\vskip -0.1in
\end{table*}

\begin{table*}[ht]
\caption{Performance for RCV1}
\label{tbl:perf-rcv1}
\vskip 0.15in
\begin{center}
\begin{small}
\begin{sc}
\begin{tabular}{lccccccccc}
\toprule
	& \multicolumn{3}{c}{100}
	& \multicolumn{3}{c}{300}
	& \multicolumn{3}{c}{400}
	\\
 & \# Assess. & Train & Test & \# Assess. & Train & Test & \# Assess. & Train & Test \\
\midrule
AP LazyBoost (0.5) & 8.93e+10 & 0.029 & 0.061 & 2.48e+11 & 0.006 & 0.047 & 3.18e+11 & 0.002 & 0.046 \\
QB LazyBoost (0.5) & 9.06e+10 & 0.028 & 0.060 & 2.55e+11 & 0.005 & 0.048 & 3.29e+11 & 0.002 & 0.046 \\
AP LazyBoost (0.9) & 1.59e+11 & 0.027 & 0.058 & 4.35e+11 & 0.005 & 0.047 & 5.60e+11 & 0.002 & 0.045 \\
QB LazyBoost (0.9) & 1.64e+11 & 0.027 & 0.058 & 4.62e+11 & 0.004 & 0.047 & 5.93e+11 & 0.001 & 0.045 \\
AP Wt. Trim (0.9) & 1.19e+11 & 0.022 & 0.059 & 2.92e+11 & 0.003 & 0.047 & 3.65e+11 & 0.001 & 0.046 \\
QB Wt. Trim (0.9) & 1.22e+11 & 0.025 & 0.058 & 3.03e+11 & 0.003 & 0.047 & 3.79e+11 & 0.001 & 0.046 \\
AP Wt. Trim (0.99) & 1.62e+11 & 0.027 & 0.059 & 4.40e+11 & 0.004 & 0.047 & 5.61e+11 & 0.001 & 0.045 \\
QB Wt Trim (0.99) & 1.70e+11 & 0.027 & 0.059 & 4.60e+11 & 0.004 & 0.048 & 5.86e+11 & 0.001 & 0.046 \\
\bottomrule
\end{tabular}
\end{sc}
\end{small}
\end{center}
\vskip -0.1in
\end{table*}

\begin{table*}[ht]
\caption{Performance for SATIMAGE}
\label{tbl:perf-satimage}
\vskip 0.15in
\begin{center}
\begin{small}
\begin{sc}
\begin{tabular}{lccccccccc}
\toprule
	& \multicolumn{3}{c}{100}
	& \multicolumn{3}{c}{300}
	& \multicolumn{3}{c}{500}
	\\
 & \# Assess. & Train & Test & \# Assess. & Train & Test & \# Assess. & Train & Test \\
\midrule
AP LazyBoost (0.5) & 1.11e+08 & 0.133 & 0.152 & 3.22e+08 & 0.094 & 0.123 & 5.17e+08 & 0.073 & 0.115 \\
QB LazyBoost (0.5) & 1.23e+08 & 0.130 & 0.150 & 3.68e+08 & 0.090 & 0.129 & 6.11e+08 & 0.067 & 0.113 \\
AP LazyBoost (0.9) & 1.88e+08 & 0.114 & 0.128 & 5.13e+08 & 0.071 & 0.119 & 8.05e+08 & 0.050 & 0.110 \\
QB LazyBoost (0.9) & 2.06e+08 & 0.114 & 0.128 & 6.07e+08 & 0.071 & 0.119 & 1.01e+09 & 0.050 & 0.110 \\
AP Wt. Trim (0.9) & 2.51e+08 & 0.756 & 0.766 & 7.56e+08 & 0.756 & 0.766 & 1.26e+09 & 0.756 & 0.766 \\
QB Wt. Trim (0.9) & 2.51e+08 & 0.755 & 0.765 & 7.57e+08 & 0.755 & 0.765 & 1.26e+09 & 0.755 & 0.765 \\
AP Wt. Trim (0.99) & 1.80e+08 & 0.109 & 0.141 & 4.66e+08 & 0.066 & 0.121 & 7.01e+08 & 0.045 & 0.113 \\
QB Wt. Trim (0.99) & 1.89e+08 & 0.109 & 0.141 & 4.91e+08 & 0.066 & 0.121 & 7.39e+08 & 0.045 & 0.113 \\
\bottomrule
\end{tabular}
\end{sc}
\end{small}
\end{center}
\vskip -0.1in
\end{table*}

\begin{table*}[ht]
\caption{Performance for W4A}
\label{tbl:perf-w4a}
\vskip 0.15in
\begin{center}
\begin{small}
\begin{sc}
\begin{tabular}{lccccccccc}
\toprule
	& \multicolumn{3}{c}{100}
	& \multicolumn{3}{c}{300}
	& \multicolumn{3}{c}{500}
	\\
 & \# Assess. & Train & Test & \# Assess. & Train & Test & \# Assess. & Train & Test \\
\midrule
AP LazyBoost (0.5) & 2.00e+08 & 0.012 & 0.019 & 5.46e+08 & 0.008 & 0.018 & 8.61e+08 & 0.006 & 0.018 \\
QB LazyBoost (0.5) & 2.35e+08 & 0.012 & 0.019 & 7.35e+08 & 0.008 & 0.018 & 1.22e+09 & 0.006 & 0.018 \\
AP LazyBoost (0.9) & 3.48e+08 & 0.012 & 0.020 & 9.66e+08 & 0.007 & 0.018 & 1.52e+09 & 0.006 & 0.018 \\
QB LazyBoost (0.9) & 4.27e+08 & 0.012 & 0.020 & 1.32e+09 & 0.007 & 0.018 & 2.19e+09 & 0.006 & 0.018 \\
AP Wt. Trim (0.9) & 2.87e+08 & 0.016 & 0.021 & 8.41e+08 & 0.016 & 0.021 & 1.40e+09 & 0.016 & 0.021 \\
QB Wt. Trim (0.9) & 2.97e+08 & 0.016 & 0.021 & 8.63e+08 & 0.016 & 0.021 & 1.43e+09 & 0.016 & 0.021 \\
AP Wt. Trim (0.99) & 3.63e+08 & 0.012 & 0.020 & 9.44e+08 & 0.007 & 0.017 & 1.44e+09 & 0.006 & 0.018 \\
QB Wt. Trim (0.99) & 3.96e+08 & 0.012 & 0.020 & 1.01e+09 & 0.007 & 0.018 & 1.52e+09 & 0.006 & 0.018 \\
\bottomrule
\end{tabular}
\end{sc}
\end{small}
\end{center}
\vskip -0.1in
\end{table*}

\clearpage
\subsection {Different Tree Depths}
\begin{table*}[ht]
\caption{Different Tree Depths: Number of Assessments after 500 rounds}
\label{DiffDepths}
\vskip 0.15in
\begin{center}
\begin{small}
\begin{sc}
\begin{tabular}{llccccc}
\toprule
 &  & 1 & 2 & 3 & 4 & 5\\
\midrule
a6a & AP Boost & 6.40e+08 & 1.23e+09 & 1.69e+09 & 2.08e+09 & 2.44e+09 \\
a6a & Quick Boost & 6.66e+08 & 1.29e+09 & 1.83e+09 & 2.34e+09 & 2.89e+09 \\
\midrule
w4a & AP Boost & 8.71e+08 & 1.38e+09 & 1.69e+09 & 1.90e+09 & 2.12e+09 \\
w4a & Quick Boost & 9.10e+08 & 1.72e+09 & 2.41e+09 & 3.07e+09 & 3.60e+09 \\
\bottomrule
\end{tabular}
\end{sc}
\end{small}
\end{center}
\vskip -0.1in
\end{table*}

We also experimented with different tree depths, and found that \texttt{Adaptive-Pruning Boost} shows more dramatic gains in terms of total number of assessments when it uses deeper trees as weak learners.
We believe this is because of accumulated gains for training more nodes
in each tree. 
We have included an example of this in Table~\ref{DiffDepths}, where for two datasets (W4A, and A6A) we show experiments at depth 1 through 5. We report the total number of assessments used by \texttt{AdaBoost} (exact greedy-optimal decision trees) after 500 rounds.

\clearpage

\section{Information Gain}

\newcommand{\KL}[2]{\operatorname{KL}\left( {#1} \middle\| {#2} \right)}
\newcommand{\B}[1]{\mathcal{B}\left( {#1} \right)}
\newcommand{\BKL}[2]{\KL{\B{#1}}{\B{#2}}}

\newcommand{\Ztotal}{Z_{n}}
\newcommand{\Zleaf}{Z_{\rho}}
\newcommand{\Leaf}{\rho}
\newcommand{\Zseen}{Z_{u}}
\newcommand{\Zunseen}{Z_{\bar{u}}}
\newcommand{\Label}{y}
\newcommand{\Zlabel}{Z_{\Leaf}^{\Label}}
\newcommand{\Zlabelseen}{Z_{u}^{\Label}}
\newcommand{\Zlabelunseen}{Z_{\bar{u}}^{\Label}}
\newcommand{\Znotlabel}{Z_{\Leaf}^{\bar{\Label}}}
\newcommand{\Znotlabelseen}{Z_{u}^{\bar{\Label}}}
\newcommand{\Znotlabelunseen}{Z_{\bar{u}}^{\bar{\Label}}}
\newcommand{\Wtotal}{w}
\newcommand{\Wlabel}{w^{\Label}}
\newcommand{\Wnotlabel}{w^{\bar{\Label}}}

Notation reference:
\begin{itemize}
	\item $\Ztotal$ is the total weight of all $n$ training examples
	\item $\Zleaf$ is the weight of examples which reached
		some leaf $\Leaf$.
	\item $\Zseen$ and $\Zunseen$ are the seen and unseen weight for
		leaf $\rho$ (where $\rho$ should be clear from context),
		so $\Zseen + \Zunseen = \Zleaf$.
	\item $\Zlabel$ is the total weight for leaf $\Leaf$ with label $\Label$.
	\item $\Zlabelseen$ and $\Zlabelunseen$ are the seen and unseen
		weight for leaf $\rho$ with label $\Label$,
		so $\Zlabelseen + \Zlabelunseen = \Zlabel$.
	\item $\Znotlabel$ is the total weight for leaf $\Leaf$ with some
		label other than $\Label$,
		so $\Znotlabel = \Zleaf - \Zlabel$.
	\item $\Znotlabelseen$ and $\Znotlabelunseen$ are the seen and unseen
		weight for leaf $\rho$ with some label other than $\Label$,
		so $\Znotlabelseen + \Znotlabelunseen = \Znotlabel$.
	\item $\Wtotal$ is the total unseen weight for all leaves,
		so $\Wtotal = \sum_\Leaf \Zunseen$.
	\item $\Wlabel$ and $\Wnotlabel$ are the fraction of total unseen
		weight with and without label $\Label$,
		so $\Wlabel + \Wnotlabel = \Wtotal$.
\end{itemize}

The ``error'' term for Information Gain is the conditional entropy
of the leaves, written as follows.
\begin{align*}
    \epsilon_n := \sum_\Leaf \frac{ \Zleaf }{ \Ztotal } \left(
    	- \sum_\Label \frac{ \Zlabel }{ \Zleaf } \lg \frac{ \Zlabel }{ \Zleaf }
    	\right)
\quad
\Rightarrow
\quad
	\Ztotal \epsilon_n = \sum_\Leaf
	\overbrace{
	\left( - \sum_\Label \Zlabel \lg \frac{ \Zlabel }{ \Zleaf }
	\right)
	}^{\Zleaf \epsilon_\Leaf}
\end{align*}
\begin{align*}
	\Zleaf \epsilon_\Leaf
	&= - \sum_\Label \Zlabel \lg \frac{ \Zlabel }{ \Zleaf }
	= - \sum_\Label ( \Zlabelseen + \Zlabelunseen ) \lg \frac{
		\Zlabelseen + \Zlabelunseen
		}{
		\Zseen + \Zunseen
		}
\\
	&= \underbrace{\left( - \sum_\Label \Zlabelseen \lg \frac{\Zlabelseen}{\Zseen}
		\right)}_{\Zseen \epsilon_u}
		+ \underbrace{\left( - \sum_\Label \Zlabelunseen \lg \frac{\Zlabelunseen}{\Zunseen} 
		\right)}_{\Zunseen \epsilon_{\bar{u}}}
	+ \sum_\Label \Zlabel \BKL
		{ \frac{\Zlabelseen}{\Zlabel} }
		{ \frac{\Zseen}{\Zleaf} },
\end{align*}
where the final equality follows by Lemma~\ref{lemma-kl}, proved below.
The bounds on information gain thus ultimately depend on
$\Zunseen \epsilon_{\bar{u}}$ and on
the KL divergence term,
\begin{align}\label{eq:kl-term}
	\sum_\Label \Zlabel \BKL
		{ \frac{\Zlabelseen}{\Zlabel} }
		{ \frac{\Zseen}{\Zleaf} }
\end{align}
where $\KL{\cdot}{\cdot}$ is the Kullback-Liebler divergence
and $\B{\cdot}$ is a Bernoulli probability distribution.
\begin{align*}
	\BKL{p}{q}
	&= p \lg{\frac{p}{q}} + (1-p) \lg{\frac{1-p}{1-q}}
\end{align*}

Since $\Zunseen \epsilon_{\bar{u}} \ge 0$
and KL divergence are non-negative, a trivial lower bound is
\begin{align}
	\Zleaf \epsilon_\Leaf \ge \Zseen \epsilon_u
	= - \sum_\Label \Zlabelseen \lg \frac{\Zlabelseen}{\Zseen}.
\end{align}

It remains to prove an upper bound.
We upper bound the weight $\Zlabel$ of KL divergence
as $\Zlabel \le \Zlabelseen + \Wlabel$.
Below, we prove the following upper bound on the KL divergence
in Eq.~\ref{eq:kl-term}.

\begin{lemma}[KL Upper Bound]\label{thm:klub}
	For any individual leaf $\rho$ and label $\Label$, we have
	\begin{align*}
	    \BKL{ \frac{\Zlabelseen}{\Zlabel} }{ \frac{\Zseen}{\Zleaf} }
         \le \lg{\frac{\Zseen + \Wtotal}{\Zlabelseen}}.
\end{align*}
\end{lemma}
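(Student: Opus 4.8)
The plan is to expand the Bernoulli KL divergence with $p := \Zlabelseen / \Zlabel$ and $q := \Zseen / \Zleaf$, bound each of the two likelihood ratios appearing inside the logarithms by the common quantity $\Zleaf/\Zlabel$, and then relax that intermediate bound to the stated one using only quantities available from the seen examples together with the total unseen weight $\Wtotal$.

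First I would write, using the Bernoulli KL formula and the identities $1-p = \Zlabelunseen/\Zlabel$ and $1-q = \Zunseen/\Zleaf$,
\begin{align*}
	\BKL{p}{q} = p \lg\frac{p}{q} + (1-p)\lg\frac{1-p}{1-q}.
\end{align*}
The key observation is that both likelihood ratios factor through the common term $\Zleaf/\Zlabel$:
\begin{align*}
	\frac{p}{q} = \frac{\Zlabelseen}{\Zseen}\cdot\frac{\Zleaf}{\Zlabel},
	\qquad
	\frac{1-p}{1-q} = \frac{\Zlabelunseen}{\Zunseen}\cdot\frac{\Zleaf}{\Zlabel}.
\end{align*}
Since the seen (resp. unseen) weight of label $\Label$ at the leaf cannot exceed the total seen (resp. unseen) weight there, we have $\Zlabelseen \le \Zseen$ and $\Zlabelunseen \le \Zunseen$, so the leading factors are each at most $1$ and hence each ratio is at most $\Zleaf/\Zlabel$.

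Because $p \ge 0$, $1-p \ge 0$, and $\lg$ is increasing, replacing each ratio by the larger value $\Zleaf/\Zlabel$ can only increase the right-hand side (this remains valid even if a logarithm is negative, since we scale it by a nonnegative coefficient), which collapses the convex combination:
\begin{align*}
	\BKL{p}{q} \le \bigl( p + (1-p) \bigr)\lg\frac{\Zleaf}{\Zlabel} = \lg\frac{\Zleaf}{\Zlabel}.
\end{align*}
Finally I would relax this intermediate (and in fact tighter) bound to the stated one via two monotone steps: dropping the unseen label weight in the denominator, $\Zlabel \ge \Zlabelseen$, and bounding the numerator by $\Zleaf = \Zseen + \Zunseen \le \Zseen + \Wtotal$, since the unseen weight at a single leaf is at most the total unseen weight $\Wtotal = \sum_\Leaf \Zunseen$. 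Together these give $\Zleaf/\Zlabel \le (\Zseen + \Wtotal)/\Zlabelseen$, and monotonicity of $\lg$ yields the claim.

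I expect the main obstacle to be spotting the shared factorization of the two ratios through $\Zleaf/\Zlabel$; once that is identified, the collapse of the convex combination and the final monotone relaxation are routine. The one delicate point is the termwise bounding of the logarithms: the step $p\lg(p/q) \le p\lg(\Zleaf/\Zlabel)$ must be justified by nonnegativity of the coefficient $p$ rather than by positivity of the logarithm, and likewise for the $(1-p)$ term. It is worth noting that the proof actually establishes the strictly stronger $\BKL{p}{q} \le \lg(\Zleaf/\Zlabel)$; the weaker stated bound is what the algorithm can compute, since $\Zleaf$ and $\Zlabel$ depend on unseen weights whereas $\Zseen$, $\Wtotal$, and $\Zlabelseen$ are all known from the assessed examples.
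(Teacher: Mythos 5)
Your proof is correct, and it takes a genuinely different (and in one respect stronger) route than the paper. The paper also proceeds by bounding the two Bernoulli likelihood ratios, but it bounds them by two \emph{different} quantities, $(\Zseen+\Wtotal)/\Zseen$ and $(\Zseen+\Wtotal)/\Zlabelseen$, observes the second dominates the first, and then invokes the order-$\infty$ R\'enyi divergence $D_\infty(\B{p}\|\B{q}) = \lg \sup_i (p_i/q_i)$ together with $\KL{\B{p}}{\B{q}} \le D_\infty(\B{p}\|\B{q})$ to conclude. You instead exploit the common factorization, with $p := \Zlabelseen/\Zlabel$ and $q := \Zseen/\Zleaf$,
\begin{align*}
	\frac{p}{q} = \frac{\Zlabelseen}{\Zseen}\cdot\frac{\Zleaf}{\Zlabel},
	\qquad
	\frac{1-p}{1-q} = \frac{\Zlabelunseen}{\Zunseen}\cdot\frac{\Zleaf}{\Zlabel},
\end{align*}
bound both leading factors by $1$ using $\Zlabelseen \le \Zseen$ and $\Zlabelunseen \le \Zunseen$, and collapse the convex combination termwise --- which is in effect an inline, elementary proof of the Bernoulli case of $\mathrm{KL} \le D_\infty$, so no external divergence inequality needs to be cited. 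This buys you two things: a self-contained argument, and the strictly tighter intermediate bound $\BKL{p}{q} \le \lg(\Zleaf/\Zlabel)$, of which the paper's stated bound is a monotone relaxation via exactly the two steps you perform ($\Zlabel \ge \Zlabelseen$ and $\Zleaf = \Zseen + \Zunseen \le \Zseen + \Wtotal$; the paper's own chain for the second ratio in fact passes through $\Zleaf/\Zlabel$ implicitly, but its first-ratio bound does not, so the symmetry is yours). Your closing remark is also on point: $\Zleaf$ and $\Zlabel$ depend on which unassessed examples reach leaf $\Leaf$, so the tighter bound is not computable mid-assessment, whereas $\Zseen$, $\Zlabelseen$, and $\Wtotal$ are --- which is precisely why the lemma is stated in the weaker form. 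Finally, your care in justifying $p\lg(p/q) \le p\lg(\Zleaf/\Zlabel)$ by nonnegativity of the coefficient $p$ rather than by the sign of the logarithm is the right discipline; that detail is swallowed silently by the paper's appeal to $D_\infty$.
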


In order to complete our upper bound, we note that
$\Zunseen \epsilon_{\bar{u}}$
is simply the unassessed weight $\Zunseen$ times the label
entropy for the unassessed weight, and with $|Y|$ total labels
the label entropy is upper bounded as $\lg |Y|$.
This yields the following bounds on the conditional entropy term
for Information Gain.
\begin{align}
	\sum_\Leaf \Zseen \epsilon_u
	\le \Ztotal \epsilon_n \le
	\sum_\Leaf \left[
		\Zseen \epsilon_u
		+ w \lg |Y|
		+ \sum_y \left(
			(\Zlabelseen + \Wlabel) \lg{\frac{\Zseen + \Wtotal}{\Zlabelseen}}
		\right)
	\right]
\end{align}

Our proofs follow.

\begin{proof}[Proof of Lemma~\ref{thm:klub}]
We bound the KL divergence using the Reyni divergence and by
bounding the two Bernoulli probability ratios.
Our probabilities are
\begin{align}
\left(\frac{\Zlabelseen}{\Zlabel}, 1-\frac{\Zlabelseen}{\Zlabel}\right) = \left(\frac{\Zlabelseen}{\Zlabelseen + \Zlabelunseen}, \frac{\Zlabelunseen}{\Zlabelseen + \Zlabelunseen}\right)
\end{align}
and
\begin{align}
\left(\frac{\Zseen}{\Zleaf}, 1-\frac{\Zseen}{\Zleaf}\right) = \left(\frac{\Zseen}{\Zseen + \Zunseen}, \frac{\Zunseen}{\Zseen + \Zunseen}\right)
\end{align}.

Our two ratio are upper bounded as follows

\begin{align}
\frac{
	\left( \frac{\Zlabelseen}{\Zlabelseen + \Zlabelunseen} \right)
}{
	\left( \frac{\Zseen}{\Zseen + \Zunseen} \right)
} 
    = \frac{\Zlabelseen}{\Zlabelseen + \Zlabelunseen} \times \frac{\Zseen + \Zunseen}{\Zseen} 
    \le \frac{\Zlabelseen}{\Zlabelseen} \times \frac{\Zseen + \Wtotal}{\Zseen}
    \le \frac{\Zseen + \Wtotal}{\Zseen}
\end{align}

and 
\begin{align}
\frac{
	\left( \frac{\Zlabelunseen}{\Zlabelseen + \Zlabelunseen} \right)
}{
	\left( \frac{\Zunseen}{\Zseen + \Zunseen} \right)
}
&= \frac{\Zlabelunseen}{\Zlabelseen + \Zlabelunseen}\times \frac{\Zseen + \Zunseen}{\Zunseen}
= \frac{\Zlabelunseen}{\Zlabelseen + \Zlabelunseen}\times \frac{\Zseen + \Zunseen}{\Zlabelunseen+\Znotlabelunseen}
\le \frac{\Zlabelunseen}{\Zlabelseen + \Zlabelunseen}\times \frac{\Zseen + \Zunseen}{\Zlabelunseen}
\\
&\le \frac{\Zseen + \Wtotal}{\Zlabelseen}.
\end{align}
Since
\begin{align*}
\frac{\Zseen + \Wtotal}{\Zseen} \le \frac{\Zseen + \Wtotal}{\Zlabelseen},
\end{align*}
by the Reyni Divergence of $\infty$ order $D_{\infty}(\B{p}\|\B{q}) = \lg{\sup_i\frac{p_i}{q_i}}$ (i.e. the log of the maximum ratio of probabilities) we conclude that
\begin{align*}
\BKL{ \frac{\Zlabelseen}{\Zlabel} }{ \frac{\Zseen}{\Zleaf} }
         \le \lg{\frac{\Zseen + \Wtotal}{\Zlabelseen}}.
\end{align*}
\end{proof}

\begin{lemma}\label{lemma-kl}
For $a, b \ge 0$ and $\alpha$, $\beta > 0$
\begin{align}
    (a + b) \lg \frac{a + b}{\alpha + \beta} = a \lg\frac{a}{\alpha} + b\lg\frac{b}{\beta} - (a+b) \BKL
		{ \frac{a}{a+b} }
		{ \frac{\alpha}{\alpha + \beta} }.
\end{align}
\end{lemma}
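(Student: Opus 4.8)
The plan is to prove this purely as an algebraic identity in the logarithm, since both sides are assembled from the same four quantities $a,b,\alpha,\beta$ and no inequality or limiting argument is involved. First I would unfold the right-hand side by writing out the Bernoulli KL divergence explicitly. Setting $p := \tfrac{a}{a+b}$ and $q := \tfrac{\alpha}{\alpha+\beta}$ (so that $1-p = \tfrac{b}{a+b}$ and $1-q = \tfrac{\beta}{\alpha+\beta}$), the definition of $\BKL{p}{q}$ together with the cancellations $(a+b)p = a$ and $(a+b)(1-p) = b$ gives
\[
(a+b)\BKL{p}{q} = a\lg\frac{p}{q} + b\lg\frac{1-p}{1-q}.
\]

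Next I would split each log-of-a-ratio using $\lg(xy)=\lg x+\lg y$. Substituting the definitions of $p$ and $q$ yields $\frac{p}{q} = \frac{a}{\alpha}\cdot\frac{\alpha+\beta}{a+b}$ and $\frac{1-p}{1-q} = \frac{b}{\beta}\cdot\frac{\alpha+\beta}{a+b}$, so the previous display expands to
\[
(a+b)\BKL{p}{q} = a\lg\frac{a}{\alpha} + b\lg\frac{b}{\beta} + (a+b)\lg\frac{\alpha+\beta}{a+b}.
\]
The key observation is that the first two terms on the right are exactly the quantity $a\lg\tfrac{a}{\alpha}+b\lg\tfrac{b}{\beta}$ appearing in the claimed identity. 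Rearranging to solve for that quantity minus $(a+b)\BKL{p}{q}$ therefore cancels those two terms and leaves $-(a+b)\lg\tfrac{\alpha+\beta}{a+b} = (a+b)\lg\tfrac{a+b}{\alpha+\beta}$, which is precisely the left-hand side, closing the identity.

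There is no genuine analytic obstacle here; the only care needed is with the boundary cases $a=0$ or $b=0$ permitted by the hypothesis $a,b\ge 0$. These are handled by the standard convention $0\lg 0 = 0$ (and $0\lg\tfrac{0}{\alpha}=0$), under which every term in which a vanishing weight multiplies a logarithm drops out consistently on both sides; the strict positivity $\alpha,\beta>0$ keeps $q$ and $1-q$ well defined, and the substantive regime is $a+b>0$ so that the Bernoulli parameter $p$ exists. This is exactly why the lemma is stated under these sign conditions, and it is the form needed to separate $\Zleaf\epsilon_\Leaf$ into its seen, unseen, and KL-divergence contributions in the derivation preceding it.
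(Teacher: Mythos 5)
Your proof is correct and is essentially the same argument as the paper's: both verify the identity by the same algebraic manipulation of logarithms, the paper working forward from the left-hand side by inserting factors to manufacture the KL term, and you running the identical computation in reverse by expanding $(a+b)\BKL{\frac{a}{a+b}}{\frac{\alpha}{\alpha+\beta}}$ and rearranging. Your explicit treatment of the boundary cases $a=0$ or $b=0$ via the $0\lg 0=0$ convention is a small point of care the paper leaves implicit, but it does not change the substance of the proof.
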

\begin{proof}
\begin{align*}
       & (a + b) \lg \frac{a + b}{\alpha + \beta} \\
     = & a \lg \frac{a + b}{\alpha + \beta}  + b \lg \frac{a + b}{\alpha +     \beta} \\
     = & a \lg \frac{a}{\alpha} \frac{\alpha(a + b)}{a (\alpha + \beta)}  + b \lg \frac{b}{\beta}\frac{\beta (a + b)}{b (\alpha + \beta)} \\
     = & a \lg \frac{a}{\alpha}  +  b \lg \frac{b}{\beta} + (a+b)\left[ \frac{a}{a+b}\lg\frac{\alpha}{\alpha+\beta} \frac{a+b}{a} + \frac{b}{a+b}\lg\frac{\beta}{\alpha+\beta}\frac{a+b}{b}\right] \\
     = & a \lg \frac{a}{\alpha}  +  b \lg \frac{b}{\beta} + (a+b)\left[ \frac{a}{a+b}\lg\frac{\alpha/(\alpha+\beta)}{a/(a+b)} + \frac{b}{a+b}\lg\frac{\beta/\alpha+\beta}{b/a+b}\right] \\
     = & a \lg \frac{a}{\alpha}  +  b \lg \frac{b}{\beta} + (a+b)\left[  - \BKL
		{ \frac{a}{a+b} }
		{ \frac{\alpha}{\alpha + \beta} }\right]  \\
	= & a \lg\frac{a}{\alpha} + b\lg\frac{b}{\beta} - (a+b) \BKL
		{ \frac{a}{a+b} }
		{ \frac{\alpha}{\alpha + \beta} }
\end{align*}
\end{proof}

\end{appendices}

\end{document}